\documentclass{article}
\usepackage[letterpaper, left=1in, right=1in, top=1in, bottom=1in]{geometry}
\usepackage{natbib}
\bibpunct{(}{)}{;}{a}{,}{,}
\usepackage[colorlinks,citecolor=blue!70!black,linkcolor=blue!70!black,
urlcolor=blue!70!black,breaklinks=true]{hyperref}
\usepackage{parskip}
\usepackage[usenames, dvipsnames]{xcolor}
\usepackage{microtype}
\usepackage{booktabs}
\usepackage{amsmath}
\usepackage{dsfont}
\usepackage{amssymb}
\usepackage{xspace}
\usepackage{url}
\usepackage{algorithm}
\usepackage[noend]{algorithmic}
\usepackage{amsthm}
\usepackage{listings}
\usepackage{bm}
\usepackage{bbm}
\usepackage{enumitem}
\usepackage{nicefrac}
\usepackage{mathrsfs}
\usepackage{inconsolata}
\usepackage[subrefformat=parens,labelformat=parens]{subfig}
\usepackage{wrapfig}
\usepackage{graphicx}
\usepackage{bigints}
\usepackage{transparent}
\usepackage{comment}
\usepackage{float}
\usepackage[nameinlink,capitalize]{cleveref}
\makeatletter
\AddToHook{cmd/appendix/before}{
\def\cref@section@alias{appendix}
\def\cref@subsection@alias{appendix}
\def\cref@subsubsection@alias{appendix}
}
\makeatother
\usepackage{thmtools}
\usepackage{thm-restate}
\usepackage{nicematrix}
\usepackage{arydshln}
\usepackage{fancyhdr}
\usepackage{mathtools}
\usepackage[scaled=.88]{helvet}
\usepackage{breakcites}
\usepackage{multirow}
\usepackage[section,below]{placeins}

\graphicspath{{./figs/}}

\def\E{{\mathbb E}}

\renewcommand{\epsilon}{\varepsilon}

\newcommand{\indic}{\mathbb{I}}

\newtheoremstyle{spaced}
  {6pt}   %
  {0pt}   %
  {\itshape} %
  {}       %
  {\bfseries} %
  {.}      %
  {0.5em}  %
  {}
\theoremstyle{spaced}

\DeclarePairedDelimiter{\brk}{[}{]}
\DeclarePairedDelimiter{\crl}{\{}{\}}

\DeclarePairedDelimiterX{\infdiv}[2]{(}{)}{%
  #1\;\delimsize\|\;#2%
}

\newcommand{\wh}[1]{\widehat{#1}}

\def\ddefloop#1{\ifx\ddefloop#1\else\ddef{#1}\expandafter\ddefloop\fi}
\def\ddef#1{\expandafter\def\csname bb#1\endcsname{\ensuremath{\mathbb{#1}}}}
\ddefloop ABCDEFGHIJKLMNOPQRSTUVWXYZ\ddefloop
\def\ddefloop#1{\ifx\ddefloop#1\else\ddef{#1}\expandafter\ddefloop\fi}
\def\ddef#1{\expandafter\def\csname b#1\endcsname{\ensuremath{\mathbf{#1}}}}
\ddefloop ABCDEFGHIJKLMNOPQRSTUVWXYZ\ddefloop
\def\ddef#1{\expandafter\def\csname sf#1\endcsname{\ensuremath{\mathsf{#1}}}}
\ddefloop ABCDEFGHIJKLMNOPQRSTUVWXYZ\ddefloop
\def\ddef#1{\expandafter\def\csname c#1\endcsname{\ensuremath{\mathcal{#1}}}}
\ddefloop ABCDEFGHIJKLMNOPQRSTUVWXYZ\ddefloop
\def\ddef#1{\expandafter\def\csname h#1\endcsname{\ensuremath{\widehat{#1}}}}
\ddefloop ABCDEFGHIJKLMNOPQRSTUVWXYZ\ddefloop
\def\ddef#1{\expandafter\def\csname hc#1\endcsname{\ensuremath{\widehat{\mathcal{#1}}}}}
\ddefloop ABCDEFGHIJKLMNOPQRSTUVWXYZ\ddefloop
\def\ddef#1{\expandafter\def\csname t#1\endcsname{\ensuremath{\widetilde{#1}}}}
\ddefloop ABCDEFGHIJKLMNOPQRSTUVWXYZ\ddefloop
\def\ddef#1{\expandafter\def\csname tc#1\endcsname{\ensuremath{\widetilde{\mathcal{#1}}}}}
\ddefloop ABCDEFGHIJKLMNOPQRSTUVWXYZ\ddefloop
\def\ddefloop#1{\ifx\ddefloop#1\else\ddef{#1}\expandafter\ddefloop\fi}
\def\ddef#1{\expandafter\def\csname scr#1\endcsname{\ensuremath{\mathscr{#1}}}}
\ddefloop ABCDEFGHIJKLMNOPQRSTUVWXYZ\ddefloop

\let\oldparagraph\paragraph
\renewcommand{\paragraph}[1]{\oldparagraph{#1}}

\renewcommand{\epsilon}{\varepsilon}

\newcommand{\SE}{\mathrm{SE}}
\newcommand{\SC}{\mathrm{SC}}

\DeclarePairedDelimiter{\paren}{(}{)}

\renewcommand{\bigm}[1]{%
  \ifcsname fenced@\string#1\endcsname
    \expandafter\@firstoftwo
  \else
    \expandafter\@secondoftwo
  \fi
  {\expandafter\amsmath@bigm\csname fenced@\string#1\endcsname}%
  {\amsmath@bigm#1}%
}

\newcommand{\DeclareFence}[2]{\@namedef{fenced@\string#1}{#2}}
\makeatother

\makeatletter
\let\save@mathaccent\mathaccent
\newcommand*\if@single[3]{%
  \setbox0\hbox{${\mathaccent"0362{#1}}^H$}%
  \setbox2\hbox{${\mathaccent"0362{\kern0pt#1}}^H$}%
  \ifdim\ht0=\ht2 #3\else #2\fi
  }
\newcommand*\rel@kern[1]{\kern#1\dimexpr\macc@kerna}
\newcommand*\widebar[1]{\@ifnextchar^{{\wide@bar{#1}{0}}}{\wide@bar{#1}{1}}}
\newcommand*\wide@bar[2]{\if@single{#1}{\wide@bar@{#1}{#2}{1}}{\wide@bar@{#1}{#2}{2}}}
\newcommand*\wide@bar@[3]{%
  \begingroup
  \def\mathaccent##1##2{%
    \let\mathaccent\save@mathaccent
    \if#32 \let\macc@nucleus\first@char \fi
    \setbox\z@\hbox{$\macc@style{\macc@nucleus}_{}$}%
    \setbox\tw@\hbox{$\macc@style{\macc@nucleus}{}_{}$}%
    \dimen@\wd\tw@
    \advance\dimen@-\wd\z@
    \divide\dimen@ 3
    \@tempdima\wd\tw@
    \advance\@tempdima-\scriptspace
    \divide\@tempdima 10
    \advance\dimen@-\@tempdima
    \ifdim\dimen@>\z@ \dimen@0pt\fi
    \rel@kern{0.6}\kern-\dimen@
    \if#31
      \overline{\rel@kern{-0.6}\kern\dimen@\macc@nucleus\rel@kern{0.4}\kern\dimen@}%
      \advance\dimen@0.4\dimexpr\macc@kerna
      \let\final@kern#2%
      \ifdim\dimen@<\z@ \let\final@kern1\fi
      \if\final@kern1 \kern-\dimen@\fi
    \else
      \overline{\rel@kern{-0.6}\kern\dimen@#1}%
    \fi
  }%
  \macc@depth\@ne
  \let\math@bgroup\@empty \let\math@egroup\macc@set@skewchar
  \mathsurround\z@ \frozen@everymath{\mathgroup\macc@group\relax}%
  \macc@set@skewchar\relax
  \let\mathaccentV\macc@nested@a
  \if#31
    \macc@nested@a\relax111{#1}%
  \else
    \def\gobble@till@marker##1\endmarker{}%
    \futurelet\first@char\gobble@till@marker#1\endmarker
    \ifcat\noexpand\first@char A\else
      \def\first@char{}%
    \fi
    \macc@nested@a\relax111{\first@char}%
  \fi
  \endgroup
}
\makeatother

\newcommand{\algname}{Active Testing via Approximate Neyman Allocation\xspace}

\title{Active Testing of Large Language Models via Approximate Neyman Allocation}
\date{}

\author{
Zeli Liu\\
{University of California, Riverside}\\
{\texttt{zliu466@ucr.edu}}
\and
Jiancheng Zhang\\
{University of California, Riverside}\\
{\texttt{jzhan745@ucr.edu}}
\and
Cong Liu\\
{University of California, Riverside}\\
{\texttt{congl@ucr.edu}}
\and
Yinglun Zhu\\
{University of California, Riverside}\\
{\texttt{yzhu@ucr.edu}}
}

\renewcommand{\thefootnote}{\fnsymbol{footnote}}
\begin{document}

\maketitle
\setcounter{footnote}{0}
\renewcommand{\thefootnote}{\arabic{footnote}}

\begin{abstract}
  Large language models (LLMs) require reliable evaluation from pre-training to test-time scaling, making evaluation a recurring rather than one-off cost. As model scales grow and target tasks increasingly demand expert annotators, both the compute and labeling costs needed for each evaluation rise rapidly. Active testing aims to alleviate this bottleneck by approximating the evaluation result from a small but informative subset of the evaluation pool. However, existing approaches primarily target classification and break down on generative tasks. We introduce a novel active testing algorithm tailored to generative tasks. Our method leverages semantic entropy from surrogate models to stratify the evaluation pool and then conducts approximate Neyman allocation based on signals extracted from these surrogates. Across multiple language and multimodal benchmarks and a range of surrogate-target model pairs, our method significantly improves on baselines and closely tracks Oracle-Neyman, delivering up to 28\% MSE reduction over Uniform Sampling and an average of 22.9\% budget savings.

\end{abstract}

\section{Introduction} \label{sec:intro}

Large language models (LLMs) are increasingly deployed across diverse reasoning, multimodal, and agentic tasks \citep{wei2022chain,liu2023visual,zhu2025test}, often via API-based services. Reliable evaluation of these models is needed repeatedly throughout model development, including model selection during pre- and post-training, regression testing across successive releases, safety and quality monitoring after deployment, and coverage across the long tail of domains and sub-tasks each model is asked to handle. The per-evaluation cost is also growing rapidly along two axes. On the inference side, four layers of scaling---pre-training, post-training, test-time, and agentic paradigms \citep{hoffmann2022training,ouyang2022training,snell2024scaling,yao2022react,zuo2025strategic, zuo2026adaptive}---substantially increase the compute spent on every input. On the annotation side, many target tasks extend beyond the capabilities of standard annotators, since, in the absence of established benchmarks for niche or real-world inputs, grading frequently requires costly domain experts.

Active testing has been proposed as a principled approach to this regime, selecting a small but informative subset for labeling and using an estimator to recover full-pool performance \citep{kossen2021active, farquhar2021statistical, berrada2025scaling}. Existing pipelines, however, fall short of the requirements of modern generative LLM evaluation: they are designed for classification rather than for multi-token generation, often require retraining a data selector, or depend on internal activations and logits that black-box API services generally do not provide. We analyze the underlying failure modes in \cref{sec:method-motivation}.

\begin{figure}[t]
\centering
\includegraphics[width=0.99\linewidth]{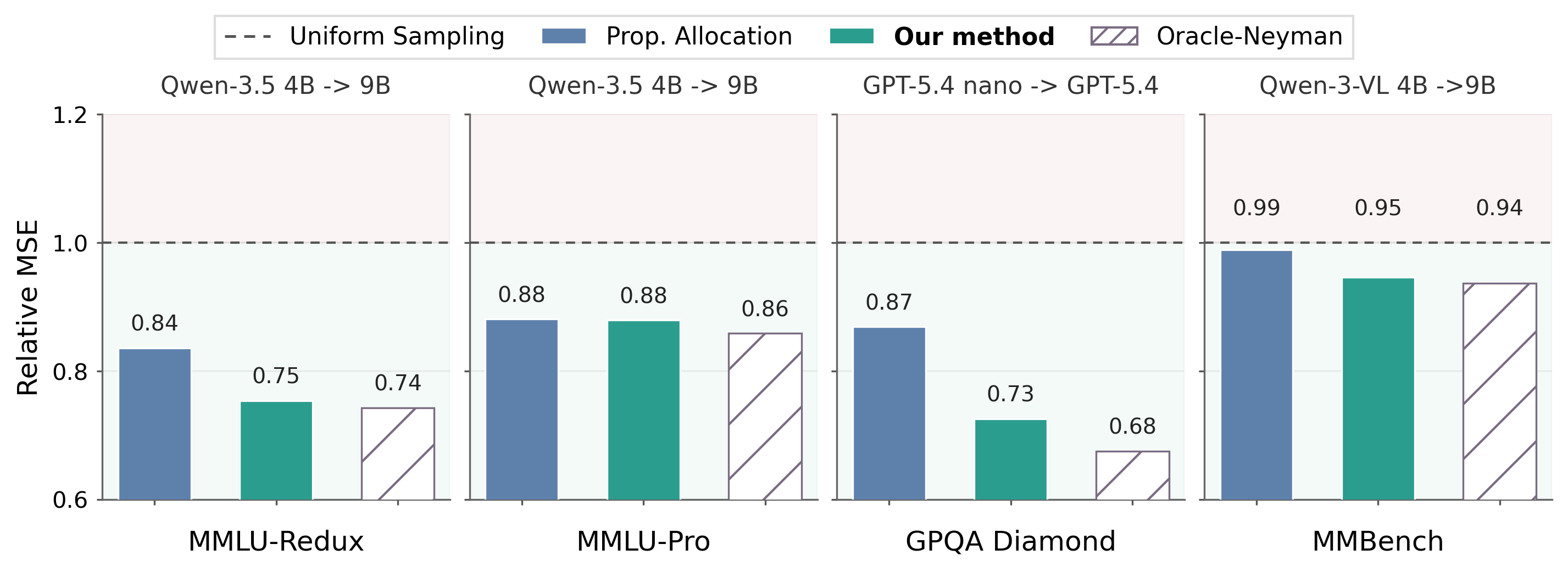}
\caption{Performance comparison across four benchmarks with labeling budget $M$=70 (\emph{MSE; lower is better}). Our method outperforms the Uniform Sampling baseline and closely approaches the Oracle-Neyman (theoretical optimum), yielding consistent gains across different modalities, multiple benchmarks, and both representative Qwen and state-of-the-art GPT models.}
\label{fig:intro_budget70}
\end{figure}

In this paper, we extend active testing to generative tasks and propose a training-free and theoretically unbiased active-testing method applicable to both large language and multimodal models. Our central insight is that uncertainty in generative outputs is fundamentally a property of meaning rather than tokens, and therefore cannot be adequately captured by classification-style entropy measures. Our method leverages \emph{semantic entropy} \citep{kuhn2023semantic, farquhar2024detecting} as a meaning-level uncertainty signal that reflects generative ambiguity in model outputs. Building on this signal, our method constructs a stratified estimator equipped with a surrogate-driven Neyman-style allocation \citep{neyman1992two}, while remaining fully training-free and requiring neither selector optimization nor access to internal activations or logits. As shown in \cref{fig:intro_budget70}, our method closely tracks Oracle-Neyman across several language and multimodal benchmarks.

\paragraph{Our contributions.}
Our main contributions are as follows:
\begin{enumerate}[label=(\roman*)]
\item To the best of our knowledge, our method is the first active-testing pipeline that provides a theoretically unbiased, label-efficient estimator for generative language and multimodal evaluation while simultaneously supporting training-free operation and API-only deployment.
\item We develop a new algorithm that leverages surrogate-driven semantic entropy to structure the sampling process, couples this signal with a Neyman-style allocation strategy, and yields a training-free, unbiased estimator for evaluating both language and multimodal models.
\item We conduct extensive experiments across multiple language and multimodal benchmarks with both open-source and advanced proprietary models. Our method delivers consistent variance reduction over Uniform Sampling across labeling budgets, reduces MSE by up to $28\%$ in the representative $M=70$ setting in \cref{fig:intro_budget70}, and achieves an average of $22.9\%$ budget savings on expert annotations at matched precision.

\end{enumerate}

\paragraph{Paper organization.} The remainder of the paper is organized as follows. \cref{sec:related-work} reviews related work. \cref{sec:setting} introduces the problem formulation. \cref{sec:methods} presents our method along with its theoretical analysis. \cref{sec:experiments} reports the main experimental results and additional analyses. \cref{sec:discussion} concludes the paper. Proofs, implementation details, and additional experimental results are deferred to the Appendix.

\section{Related Work} \label{sec:related-work}

\paragraph{Active testing.} Active testing was introduced by \citet{kossen2021active} for label-efficient model evaluation, using a surrogate-driven acquisition function debiased by the LURE estimator \citep{kossen2022active, farquhar2021statistical}, and later extended to LLMs by \citet{berrada2025scaling} via a frozen in-context surrogate that removes the retraining loop. Both methods are tied to the classification setting, which mismatches the generative deployment of modern LLMs.

AcTracer \citep{huang2026actracer} performs multi-stage sampling which relies on hidden states unavailable in black-box APIs, and its confidence-matching proxy does not yield a theoretically unbiased estimator. GAT \citep{ramakrishnan2026generative} retains the LURE estimator but rewrites each input into a True/False proxy task with first-token acquisition, again mismatched with multi-token generative evaluation. \citet{zhou2025accelerating} study efficient pairwise win-rate estimation between two LLMs via control variates, whereas we target single-model evaluation.

\paragraph{Sampling, stratification, and budget allocation.} Sampling theory provides classical tools for estimating population quantities from a small queried subset \citep{cochran1977sampling, lohr2021sampling}. Variance-reducing designs include stratified sampling \citep{neyman1992two, dalenius1959minimum}, unequal-probability sampling \citep{hansen1943theory}, systematic sampling \citep{madow1944theory}, cluster and multi-stage sampling, and adaptive designs \citep{thompson1990adaptive}. The companion question of budget allocation across strata determines how labels are distributed, ranging from equal allocation and proportional allocation to variance-optimal Neyman allocation under known stratum standard deviations \citep{neyman1992two}. Such designs admit unbiased finite-population estimation through inverse-probability weighting \citep{horvitz1952generalization}.

\paragraph{Active learning.}
Efficient utilization of labeling resources is also a central objective in model training, alongside broader efficiency considerations \citep{wang2024moq,wang2025fedpai,li2025mixtraining}.
Active learning aims to select the most informative subset from an unlabeled pool for annotation, thereby reducing labeling costs \citep{settles2009active}. Traditional active learning methods are typically based on either diversity \citep{sener2017active,agarwal2020contextual} or uncertainty \citep{tong2001support,scheffer2001active}. It has been shown to improve data efficiency from both theoretical perspectives \citep{krishnamurthy2019active,puchkin2021exponential,zhu2022active,zhu2022efficient} and empirical studies \citep{saran2023streaming,zhang2023labelbench}. Recently, active learning has also emerged as a common strategy for improving data efficiency in large pretrained models \citep{bhatt2024experimental,yuan2024hide,zhang2025towards}.

\section{Problem Setting}
\label{sec:setting}

\paragraph{Active testing.} We study active testing for large language and multimodal models.
We denote $f$ as the target model and let $\cD = \crl{x_i}_{i=1}^N$ be a finite pool of unlabeled test inputs drawn from an evaluation distribution $p_{\mathrm{eval}}$.
We denote $\ell$ as the loss function of predictive errors. Following \citet{berrada2025scaling}, the ideal population risk of interest is:
\begin{equation*}
R
\;=\;
\E_{(x,y)\sim p_{\mathrm{eval}}}\!\brk*{\ell(f(x), y)}.
\end{equation*}
Since $p_{\mathrm{eval}}$ and the labels of all possible test inputs are unknown, we condition on the fixed pool $\cD$ and instead estimate the finite-pool risk:
\begin{equation*}
R_{\mathrm{D}}
\;=\;
\frac{1}{N} \sum_{i=1}^{N} \ell \big(f(x_i), y_i\big).
\end{equation*}

Active testing avoids labeling the full pool by selecting a subset $\cS \subset \cD$ of size $M \ll N$ and constructing an estimator $\wh{R}$ for $R_{\mathrm{D}}$ from the labeled subset \citep{kossen2021active}.
The objective is to minimize the estimation error of $\wh{R}$ under a fixed labeling budget while preserving unbiasedness with respect to the full-pool risk, $\E[\wh{R}] = R_{\mathrm{D}}$.

\paragraph{Evaluation metrics.} Following \citet{berrada2025scaling}, we evaluate a testing method by the squared error of its risk estimate against the ground-truth finite-pool risk $R_{\mathrm{D}}$, where $R_{\mathrm{D}}$ is obtained by labeling the entire pool. For a fixed labeling budget $M$, let $\wh{R}^{(t)}$ denote the estimate produced by estimator $\wh{R}$ in Monte Carlo trial $t$, and let $T=3{,}000$ be the number of trials. We compute the mean squared error (MSE)
\begin{equation*}
\mathrm{MSE}(\wh{R}) = \frac{1}{T}\sum_{t=1}^{T}\paren*{\wh{R}^{(t)} - R_{\mathrm{D}}}^2.
\end{equation*}

To enable comparisons across different benchmarks, we also follow \citet{berrada2025scaling} and report the error relative to the Uniform Sampling baseline. Let $\wh{R}_{\mathrm{unif}}$ denote the estimator induced by {Uniform Sampling}. We report the relative-MSE ratio $r(\wh{R}) = \frac{\mathrm{MSE}(\wh{R})}{\mathrm{MSE}(\wh{R}_{\mathrm{unif}})}$.

\section{Method} \label{sec:methods}

\begin{figure}[!t]
\centering
\includegraphics[width=\linewidth]{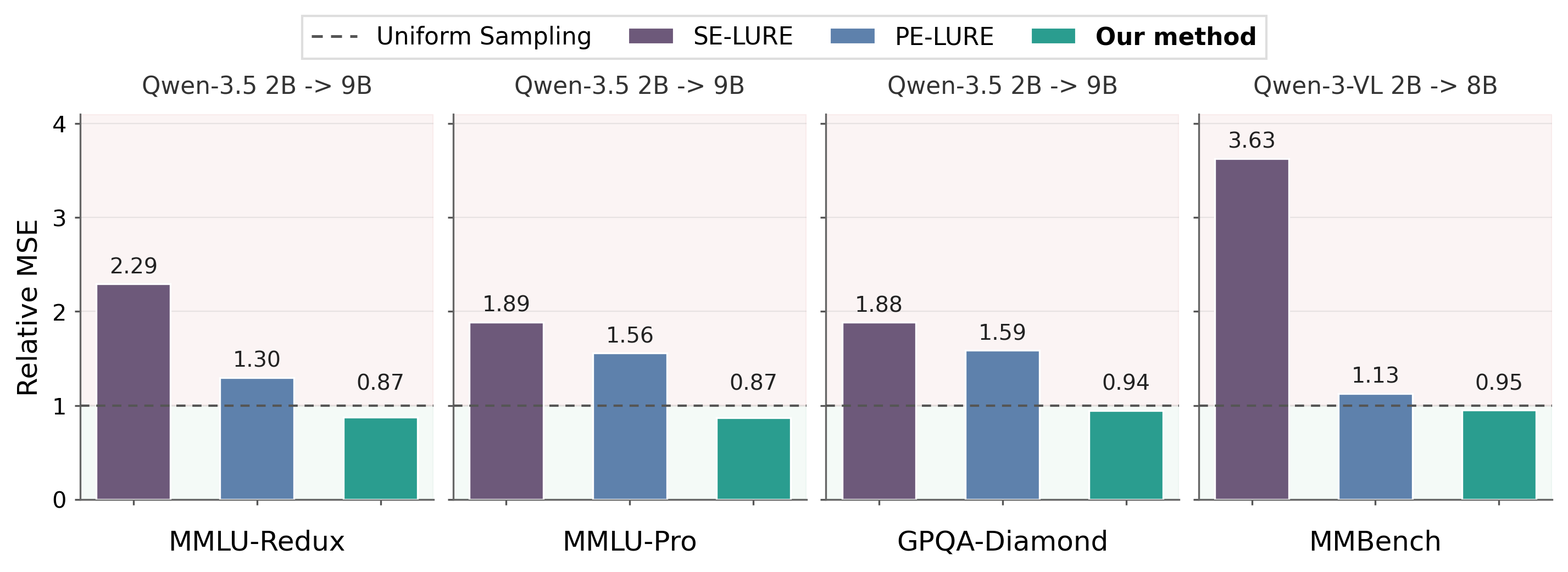}
\caption{\textbf{Comparison with adapted LURE variants.} We report the relative MSE (lower is better) of SE-LURE, PE-LURE, and our method against Uniform Sampling across four benchmarks and model pairs at budget $M = 100$.}

\label{fig:lure_mse}
\end{figure}

In this section, we first present the motivation of our approach in \cref{sec:method-motivation}. Building on this, we introduce the proposed method and provide its unbiasedness guarantee in \cref{sec:method-guarantees}.

\subsection{Motivation} \label{sec:method-motivation}

A natural way to extend the method of \citet{berrada2025scaling}, which provides an effective solution for active testing in classification tasks, to generative settings is to retain their LURE estimator and replace first-token predictive entropy (PE) with token-averaged PE computed over multi-token generations from a surrogate model. However, this straightforward PE-based adaptation does not outperform Uniform Sampling, as shown in \cref{fig:lure_mse}.
We identify two factors underlying this failure. \textit{First, token-level PE entangles semantic uncertainty with surface-form variation}: semantically equivalent generations with different phrasings induce different token-level logit sequences, causing PE to be inflated by paraphrastic diversity rather than genuine difficulty \citep{kuhn2023semantic,farquhar2024detecting}. \textit{Second, modern LLMs are systematically overconfident}, collapsing PE into a narrow low-entropy band and weakening the acquisition signal; surrogate and target PE remain only moderately correlated in our analysis, yielding a substantial mass of instances with systematic disagreement, where surrogate and target confidence sharply diverge \citep{farquhar2024detecting}.

Based on the analysis of the failure factors, we further consider replacing PE with semantic entropy (SE), a meaning-level uncertainty signal that we introduce in \cref{sec:method-pipeline}. However, this SE-LURE estimator also fails across various datasets, as shown in \cref{fig:lure_mse}. This suggests that the failure is not due to the uncertainty signal, but to the LURE estimator itself. In particular, LURE applies per-sample importance weights ($v_m \propto 1/a_m(i_m)$) \citep{berrada2025scaling}, which amplify surrogate--target mismatches. When the surrogate assigns low acquisition probability to instances that the target is actually uncertain about, these weights dominate the estimator. Since such mismatches are systematic rather than rare, improving the entropy signal alone does not reduce variance but instead exposes this instability.

This observation indicates that the core issue lies in the importance-reweighting mechanism, suggesting the need for an alternative estimation strategy that avoids such variance amplification. Motivated by this, we turn to a Neyman-style stratified estimator, which allocates the labeling budget in a variance-aware but unbiased manner.

\subsection{Our Method} \label{sec:method-pipeline}

\begin{algorithm}[t]
    \renewcommand{\algorithmicrequire}{\textbf{Input:}}
    \renewcommand{\algorithmicensure}{\textbf{Output:}}
    \caption{\algname}
    \label{alg:method}
    \begin{algorithmic}[1]
      \setlength{\itemsep}{2pt}

      \REQUIRE Pool set $\cD$, target model $f$, surrogate model $g$, label budget $M$.
      \ENSURE Estimate of full-pool performance $\wh{R}$.

    \STATE \textbf{Signal generation.}\ \ Compute semantic entropy $\SE(g(x_i))$ and calculate self-consistency score $\SC(g(x_i))$ for all $x_i \in \cD$, respectively: 
    \begin{equation}   \label{eq:se-kuhn}
     \SE(g(x_i))
      \;=\;
      -\sum_{c \in \cC_i}\paren*{\sum_{s \in c} p(s \mid g(x_i))}
      \log\paren*{\sum_{s \in c} p(s \mid g(x_i))},      
    \end{equation}
    
    \begin{equation}\label{eq:sc}
     \SC(g(x_i)) \;=\; \frac{1}{k} \max_{c \in \cC_i} \sum_{j=1}^k \indic[g(x_i)^{(j)} = c].
    \end{equation}
    \STATE \textbf{Stratification on semantic entropy.}\ \ Partition $\cD$ into $H$ strata $\cD_0, \ldots, \cD_{H-1}$ by binning instances according to their semantic entropy $\SE(g(x_i))$, set $N_h = |\cD_h|$, and compute the mean self-consistency score in each stratum:
    \begin{equation*}
      p_h \;=\; \frac{1}{N_h}\sum_{x_i \in \cD_h} \SC(g(x_i)).
    \end{equation*}
    \STATE \textbf{Budget allocation.}\ \  Allocate $m_h$ labels per stratum:
    \begin{equation}
m_h \;\propto\; N_h \cdot \paren*{\sqrt{p_h(1-p_h)} +  \delta},
\qquad \sum_h m_h = M, \qquad 1 \leq m_h \leq N_h.
\label{eq:allocation}
\end{equation}

    \STATE \textbf{Estimation.}\ \ Sample $\cS_h \subset \cD_h$ uniformly without replacement with $|\cS_h| = m_h$, and estimate full-pool performance:
    
\begin{equation}
\wh{R}
\;=\;
\frac{1}{N} \sum_{h=0}^{H-1} N_h \,\bar{\ell}_h,
\qquad
\bar{\ell}_h \;=\; \frac{1}{m_h}\sum_{x_i \in \cS_h}\ell(f(x_i), y_i).
\label{eq:estimator}
\end{equation}

    \end{algorithmic}
\end{algorithm}

We present our method in \cref{alg:method}, a training-free and theoretically unbiased active testing framework.
The proposed estimator constructs SE-based strata from surrogate-derived uncertainty signals, performs a proxy Neyman-style allocation of the labeling budget across strata, and aggregates the per-stratum target losses with a Horvitz--Thompson estimator. We next explain the details of each of the four steps in \cref{alg:method}.

\paragraph{Signal generation.} This step generates uncertainty signals by running the surrogate model $g$ on the evaluation pool $\cD$ to produce $k$ generations per input ($k = 10$ throughout). The generations are used to calculate semantic entropy and self-consistency scores. This surrogate pass incurs only modest overhead, as $g$ is substantially smaller than the target model $f$, and the $k$ generations share a single prefill through KV-cache reuse supported by modern inference frameworks \citep{kwon2023efficient}.

Semantic entropy addresses the fact that token-level predictive entropy in generative models can conflate uncertainty over \emph{meaning} with uncertainty over surface form \citep{kuhn2023semantic, farquhar2024detecting}. For each input $x_i$, it clusters generations into meaning-equivalence classes $\cC_i$ and computes entropy over the induced class probabilities, as in \cref{eq:se-kuhn}. When the answer space is closed, as in our benchmarks after parsing, the equivalence classes coincide with distinct parsed answers and \cref{eq:se-kuhn} reduces to the Shannon entropy over parsed-answer frequencies. 

Based on the generated samples, we construct two training-free uncertainty signals: the per-sample semantic entropy $\SE(g(x_i))$ and the self-consistency score \citep{wang2022self}, defined as in \cref{eq:sc}. Here $g(x_i)^{(j)}$ is the $j$-th surrogate generation for input $x_i$. Per-sample semantic entropy captures the surrogate's uncertainty on sample $x_i$, while the self-consistency score measures agreement among surrogate generations (higher values indicate stronger consensus and easier instances). It is used to construct the Bernoulli proxy for Neyman allocation described below.

\paragraph{Stratification on semantic entropy.} We partition $\cD$ into $H$ strata $\cD_0, \ldots, \cD_{H-1}$ of sizes $N_h = |\cD_h|$ based on semantic entropy. In practice, a non-trivial fraction of samples satisfy $\SE(g(x_i))=0$, so we assign them to a base stratum $\cD_0 = \{x_i \in \cD : \SE(g(x_i))=0\}$ and apply equal-quantile binning to the remaining samples $\{x_i \in \cD : \SE(g(x_i)) > 0\}$. We set $H=5$ throughout and provide an ablation over $H$ in \cref{sec:exp-ablations}.

\paragraph{Budget allocation.} Given a fixed stratification, classical Neyman allocation $m_h^* \propto N_h \sigma_h$ provides a theoretical optimum that minimizes the estimation variance \citep{neyman1992two}. However, this optimum depends on the within-stratum standard deviation $\sigma_h$ of the target loss, which is not directly accessible without labeling the entire pool. To address this, we replace $\sigma_h$ with a smoothed Bernoulli proxy $\sqrt{p_h(1-p_h)} + \delta$, where $p_h = \frac{1}{N_h}\sum_{i \in \cD_h} \SC(g(x_i))$ denotes the mean surrogate self-consistency within stratum $\cD_h$. The resulting allocation is defined as in \cref{eq:allocation}. The offset $\delta > 0$ prevents $m_h$ from collapsing when $p_h$ approaches~$1$ in the base stratum. We use $\delta = 0.75$ and report an ablation study in \cref{sec:exp-ablations}.

\paragraph{Estimation.}
Within each stratum $\cD_h$, we draw $m_h$ samples uniformly without replacement and obtain their labels. Let $\cS_h \subset \cD_h$ denote the selected subset, and define the within-stratum empirical loss mean
\begin{equation*}
\bar{\ell}_h \;=\; \frac{1}{m_h} \sum_{i \in \cS_h} \ell(f(x_i), y_i).
\end{equation*}
Since each sample in stratum $\cD_h$ is selected with equal inclusion probability $\pi_h = m_h / N_h$, we can write the Horvitz--Thompson form of $\wh{R}$ over the globally selected set $\cS = \bigcup_{h=0}^{H-1} \cS_h$:
\begin{equation*}
\wh{R} \;=\; \frac{1}{N} \sum_{i \in \cS} \frac{\ell(f(x_i), y_i)}{\pi_{i}}.
\end{equation*}
Under stratified simple random sampling, where $\pi_i = \pi_h$ for all $x_i \in \cD_h$, this expression simplifies to \cref{eq:estimator}, which is a theoretically unbiased estimator for the finite-pool risk $R_{\mathrm{D}}$, as we discuss next.

\subsection{Theoretical Analysis} \label{sec:method-guarantees}

We establish the following theorem to formalize the theoretical unbiasedness of our method.

\begin{restatable}{theorem}{Unbiasedness}
\label{thm:unbiasedness}
Our method yields an unbiased estimator of the finite-pool risk:
\[
\mathbb{E}\brk*{\wh{R}} \;=\; R_{\mathrm{D}},
\]
for any stratification $\{\cD_h\}_{h=0}^{H-1}$ of $\cD$ and any allocation $\{m_h\}_{h=0}^{H-1}$ satisfying $1 \leq m_h \leq N_h$ and $\sum_{h=0}^{H-1} m_h = M$. The result follows from (i) unbiasedness of within-stratum uniform sampling without replacement, and (ii) linearity of expectation for the stratified Horvitz--Thompson estimator.
\end{restatable}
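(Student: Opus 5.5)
The plan is to treat the stratification $\{\cD_h\}$ and the allocation $\{m_h\}$ as fixed, and take the expectation only over the sampling randomness in the Estimation step. This is legitimate because both are computed solely from surrogate outputs $g(x_i)$ on the unlabeled pool, which are independent of which instances get labeled. If the surrogate generations are themselves random, I would condition on them and apply the tower property at the end. Either way, it suffices to show $\E[\wh{R} \mid \{\cD_h\},\{m_h\}] = R_{\mathrm{D}}$. Throughout, write $\ell_i = \ell(f(x_i),y_i)$; these are fixed finite-pool quantities, so $f$ and the labels are treated as deterministic.

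\textbf{Step 1: within-stratum unbiasedness.} Fix a stratum $h$ with $1 \le m_h \le N_h$, and let $\cS_h$ be a uniform random $m_h$-subset of $\cD_h$. By symmetry, each $x_i \in \cD_h$ has inclusion probability
\[
\Pr[x_i \in \cS_h] = \binom{N_h-1}{m_h-1}\Big/\binom{N_h}{m_h} = \frac{m_h}{N_h} = \pi_h.
\]
Writing $\bar{\ell}_h = \frac{1}{m_h}\sum_{x_i\in\cD_h}\indic[x_i\in\cS_h]\,\ell_i$ and applying linearity of expectation gives
\[
\E[\bar{\ell}_h] = \frac{1}{m_h}\sum_{x_i\in\cD_h}\frac{m_h}{N_h}\ell_i = \frac{1}{N_h}\sum_{x_i\in\cD_h}\ell_i .
\]
The constraint $m_h \ge 1$ is exactly what keeps $\bar{\ell}_h$ well-defined, and $m_h \le N_h$ is what makes sampling without replacement feasible.

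\textbf{Step 2: aggregation.} Apply linearity of expectation to \cref{eq:estimator}:
\[
\E[\wh{R}] = \frac{1}{N}\sum_h N_h \E[\bar{\ell}_h] = \frac{1}{N}\sum_h\sum_{x_i\in\cD_h}\ell_i = \frac{1}{N}\sum_{i=1}^N \ell_i = R_{\mathrm{D}}.
\]
The last equality holds because the strata partition $\cD$, so each index is counted exactly once. Equivalently, in the Horvitz--Thompson form, $\E\bigl[\sum_{i\in\cS}\ell_i/\pi_i\bigr] = \sum_i \pi_i\,\ell_i/\pi_i$, which uses $\pi_i > 0$ for every $i$. Independence across strata is not needed; linearity of expectation alone suffices.

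\textbf{Main obstacle.} The calculation itself is routine. The one point needing care is justifying that the allocation can be treated as fixed. The rounding of \cref{eq:allocation} to integers, and the strata boundaries, must depend only on surrogate signals and never on target labels; otherwise conditioning would break. I would state this measurability explicitly, so that the result holds conditionally for every realization of the surrogate signals and therefore also unconditionally.
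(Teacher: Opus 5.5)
Your proposal is correct and follows the paper's proof essentially step for step: the same inclusion-probability computation $\binom{N_h-1}{m_h-1}/\binom{N_h}{m_h} = m_h/N_h$, linearity of expectation to get $\E[\bar\ell_h]$ equal to the within-stratum mean, and a second linearity step that uses the fact that the strata partition $\cD$. Your added remark about conditioning on the surrogate signals, so that the strata and the allocation can be treated as fixed, makes explicit what the paper leaves implicit in its ``Fix a stratification\ldots'', but it does not change the argument.
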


We defer the formal proof to \cref{app:proof}.

\section{Empirical Results} \label{sec:experiments}

We conduct comprehensive experiments to examine the effectiveness of our method. We describe the experimental setup in \cref{sec:exp-setup}, present the main results in \cref{sec:exp-main}, and provide
analyses and ablations in \cref{sec:exp-ablations}.

\begin{table}[H]

\caption{Benchmarks used in our experiments.}
\label{tab:datasets}
\centering
\small
\resizebox{0.95\textwidth}{!}{

\begin{tabular}{l l l c}
\toprule
Benchmark & Task type & Knowledge range & Pool size \\
\midrule
MMLU-Redux \citep{gema2025we}    & Language & 57 academic subjects (cleaned MMLU)        & $3{,}000$  \\
MMLU-Pro \citep{wang2024mmlu}        & Language & STEM-heavy multi-domain reasoning          & $12{,}032$ \\
GPQA-Diamond \citep{rein2023gpqa}      & Language & Graduate-level science reasoning  & $198$      \\
MMBench \citep{liu2024mmbench}          & Vision-language & Multimodal perception and reasoning   & $4{,}329$  \\
\bottomrule
\end{tabular}}
\end{table}

\subsection{Experimental setup} \label{sec:exp-setup}

\paragraph{Datasets.} We evaluate on multiple benchmarks spanning advanced reasoning and perception tasks for modern large language and multimodal models. MMLU-Redux \citep{gema2025we} and MMLU-Pro \citep{wang2024mmlu} assess broad academic knowledge and chain-of-thought reasoning, GPQA-Diamond \citep{rein2023gpqa} targets graduate-level ``Google-proof'' science questions, and MMBench \citep{liu2024mmbench} evaluates vision-language perception and reasoning. Since open-ended generation evaluation remains an open problem, we focus on benchmarks with verifiable ground-truth answers, ensuring that estimation error reflects the active testing pipeline rather than metric noise. \cref{tab:datasets} summarizes the task type, knowledge domain, and pool size of each benchmark.

\paragraph{Models.} We study both large language and multimodal settings using the open-source Qwen-3.5 series \citep{qwen2026qwen35}, Qwen-3-VL series \citep{bai2025qwen3}, and advanced API-based GPT-5.4 series \citep{openai2026gpt54}. These model families cover both controlled open-weight scaling and frontier API deployment settings. For each surrogate-target pair, the weaker or smaller model is used as the surrogate $g$, while the stronger or larger model serves as the target $f$. All models generate free-form text, which is subsequently mapped to canonical answers via a parser for evaluation; we do not restrict the target to first-token-only decoding.

\paragraph{Baselines.} As shown in \cref{fig:lure_mse}, both PE-LURE and SE-LURE fail to outperform \emph{Uniform Sampling} across all evaluated benchmarks. We therefore exclude them from the main comparison and focus on baselines that can plausibly improve over Uniform Sampling. We use the MSE and relative MSE defined in \cref{sec:setting} as the evaluation metrics, and compare our method against three estimators: \emph{Uniform Sampling}, \emph{Proportional Allocation}, and \emph{Oracle-Neyman}.

{Uniform Sampling} selects $M$ samples uniformly without replacement for labeling. {Proportional Allocation} uses the same SE-based stratification as our method, but allocates the labeling budget proportionally to stratum size, i.e., $m_h \propto N_h$. This baseline serves as a within-framework reference: the gap between \emph{Uniform Sampling} and \emph{Proportional Allocation} isolates the effect of semantic-entropy-based stratification, while the additional gap between \emph{Proportional Allocation} and \emph{our method} captures the benefit of our proxy-Neyman allocation in \cref{eq:allocation}. \emph{Oracle-Neyman} allocates the budget according to the oracle target-side within-stratum standard deviation, i.e., $m_h \propto N_h \sigma_h$. As it relies on inaccessible target statistics, it is infeasible in practice and serves only as a theoretical upper bound on the best achievable allocation under the given stratification.

\begin{figure}[!t]
\centering
\includegraphics[width=0.99\linewidth]{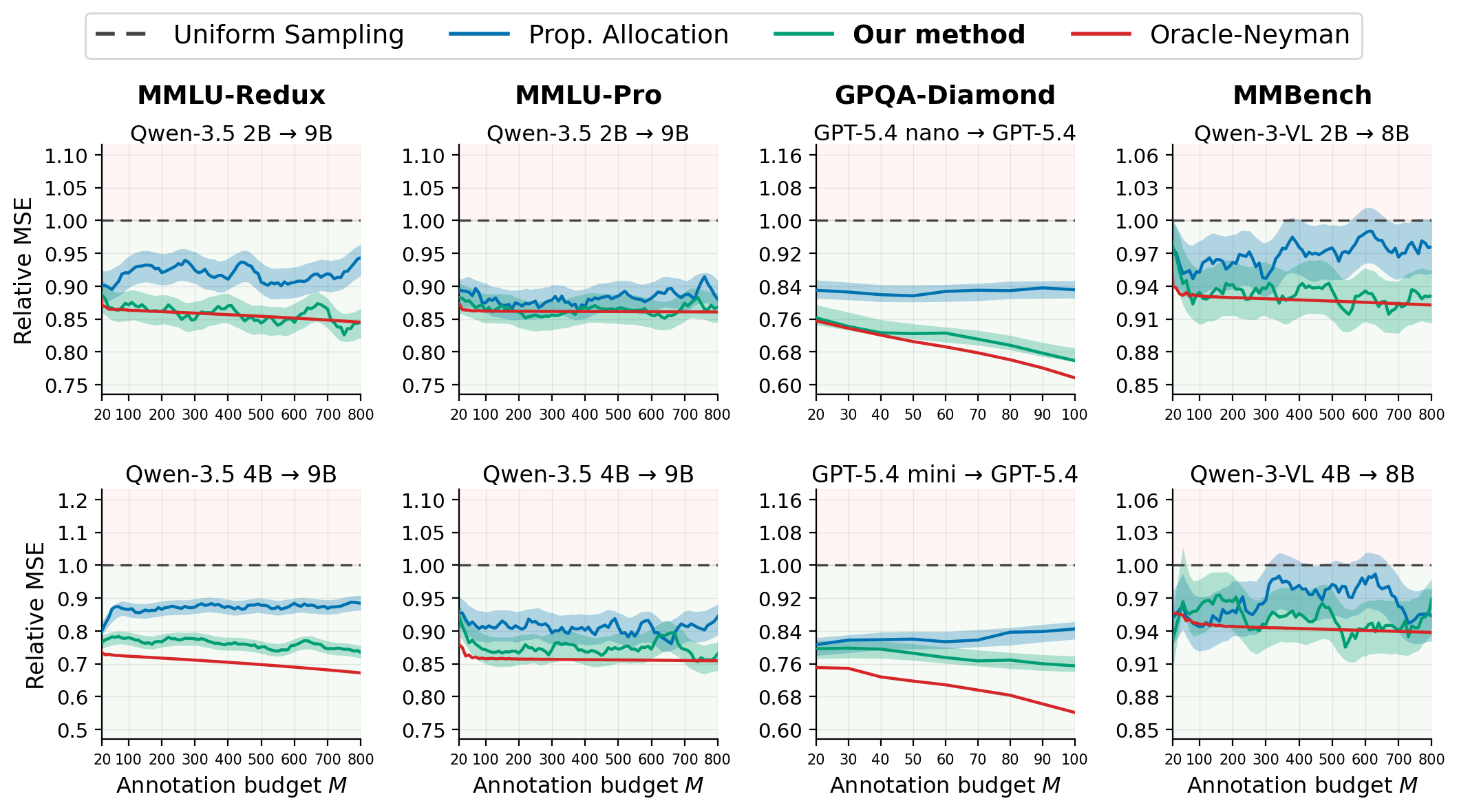}

\caption{Results of active testing across four benchmarks with various surrogate-target model pairs. We report relative MSE (\emph{lower is better}) as annotation budget $M$ increases. Our method consistently achieves better performance while closely matching the Oracle-Neyman (theoretical optimum). Each curve is averaged over $3{,}000$ Monte Carlo trials, with shaded bands indicating $\pm 1$ standard error of the mean.}

\label{fig:grid_2x4}
\end{figure}
\FloatBarrier

\subsection{Main results} \label{sec:exp-main}

\paragraph{Our method achieves consistent performance gains.}
We compare our method with all baselines across extensive benchmarks covering multiple domains and target model variants. As shown in
\cref{fig:grid_2x4}, our method consistently reduces estimation error over Uniform Sampling across all evaluated benchmarks, surrogate-target pairs, and labeling budgets. At the representative $M=70$ setting in \cref{fig:intro_budget70}, our method reduces MSE by up to $28\%$ relative to {Uniform Sampling}. The gains are most pronounced in the small-budget regime, where active evaluation is practically most useful and each additional label has the highest marginal cost. Across both language and vision-language benchmarks, our method remains below \emph{Uniform Sampling} throughout the budget sweep and closely follows the infeasible \emph{Oracle-Neyman} optimum, indicating that the proposed training-free pipeline captures much of the attainable variance reduction without target-side full-pool labels.

\begin{table}[!htbp]
\caption{Label savings across all benchmarks with different model pairs. We report the labeling budget required by Uniform Sampling and by our method to reach the same MSE, along with the resulting label savings. }

\label{tab:budget_savings}
\centering
\small
\begin{tabular}{l l c c c}
\toprule
Benchmark & Surrogate \,$\to$\, Target & Uniform Sampling & Our method & \textbf{Our savings} \\
\midrule
MMLU-Redux    & Qwen-3.5 4B $\to$ 9B        & $200$ & $144$ & \textbf{$28.0\%$} \\
MMLU-Pro      & Qwen-3.5 2B $\to$ 4B        & $340$ & $266$ & \textbf{$21.8\%$}          \\
GPQA-Diamond  & GPT-5.4 Nano $\to$ GPT-5.4  & $30$  & $22$  & \textbf{$26.7\%$}          \\
MMBench       & Qwen-3-VL 2B $\to$ 4B       & $80$  & $68$  & \textbf{$15.0\%$}          \\
\midrule
Average &  &  &  & \textbf{$22.9\%$} \\
\bottomrule
\end{tabular}
\end{table}

\paragraph{Our method yields noticeable label savings.} To quantify the label savings achieved by our method, we report the savings at matched MSE in \cref{tab:budget_savings}. We reframe the precision gains in terms of expert labeling cost, the key bottleneck in evaluation practice. As shown in \cref{tab:budget_savings}, our method saves $15.0\%$--$28.0\%$ of expert annotations across all four benchmarks. These savings are especially meaningful on reasoning benchmarks like GPQA-Diamond, where each science question typically requires co-verification by multiple domain experts.
\FloatBarrier

\subsection{Analyses and Ablations} \label{sec:exp-ablations}

We ablate the two central design choices of our method in the main text: the stratification method that maps surrogate semantic entropy onto bin boundaries, and the allocation rule that distributes the budget across strata. Two additional ablations, on the number of strata $H$ and on the offset $\delta$ in the proxy-Neyman allocation, are deferred to \cref{app:more-ablations}; both confirm that our default configuration ($H=5$, $\delta=0.75$) is at or near the optimum across budgets. All ablations are conducted on MMLU-Redux with the Qwen-3.5 2B surrogate against the Qwen-3.5 9B target, and we report \emph{relative MSE} (\%) versus \emph{Uniform Sampling} so that lower is better and $100\%$ corresponds to no improvement over \emph{Uniform Sampling}.

\paragraph{Selection of stratification method.}
To evaluate the effectiveness of our stratification strategy, we conduct additional experiments by replacing it with other stratification methods: \emph{Equal-width}, \emph{$k$-means}, and \emph{Quantile}, where {Equal-width} bins partition the SE range into equal-width intervals; \emph{$k$-means} performs data-adaptive 1-D clustering on SE values; \emph{Quantile} binning produces equal-frequency strata. We observe two main findings in \cref{tab:abl_stratmethod}. First, Equal-width binning performs the worst (86.1\% on average), as it over-concentrates samples near zero SE and under-represents high-SE regions. Second, data-adaptive methods such as $k$-means and Quantile binning perform similarly on average (84.8\% and 84.7\%, respectively), while Adaptive SE (Ours) achieves the best average performance (83.7\%), improving over them by isolating the zero-SE cluster where within-stratum variance is negligible. Based on these results, we adopt Adaptive SE stratification as the default strategy.

\begin{table}[!htbp]
\caption{Ablation study of \cref{alg:method} with different stratification methods (when $H$=5). We conduct experiments on the MMLU-Redux dataset, using Qwen-3.5 9B as the target model and Qwen-3.5 2B as the surrogate model. We report relative MSE (\%, \emph{lower is better}) versus Uniform Sampling and the best performance in each row is highlighted in bold.}

\label{tab:abl_stratmethod}
\centering
\small
\begin{tabular}{c c c c c}
\toprule
Budget $M$ & Equal-width & $k$-means & Quantile & Adaptive SE (Ours) $\downarrow$ \\
\midrule
$50$    & $89.9$ & $87.0$ & $86.1$ & $\mathbf{85.7}$ \\
$100$   & $89.3$ & $82.0$ & $87.7$ & $\mathbf{80.7}$ \\
$200$   & $84.1$ & $84.8$ & $\mathbf{81.8}$ & $82.5$ \\
$400$   & $\mathbf{84.5}$ & $84.6$ & $84.7$ & $86.9$ \\
$800$   & $82.7$ & $85.6$ & $83.3$ & $\mathbf{82.5}$ \\
\midrule
Average & $86.1$ & $84.8$ & $84.7$ & $\mathbf{83.7}$ \\
\bottomrule
\end{tabular}
\end{table}

\paragraph{Effectiveness of allocation strategy.}
We assess the effectiveness of our allocation strategy on MMLU-Redux using the Qwen-3.5 2B surrogate against the Qwen-3.5 9B target. As shown in \cref{tab:abl_alloc}, we fix SE-based stratification and compare five allocation strategies: 
\emph{Equal Allocation} ($m_h = M/H$), \emph{Proportional Allocation} ($m_h \propto N_h$), \emph{Power Allocation} ($m_h \propto \sqrt{N_h}$), \emph{our method} from \cref{eq:allocation}, and \emph{Oracle-Neyman} ($m_h \propto N_h \sigma_h$) which serves as an infeasible upper bound under the fixed stratification. Two findings emerge here. First, size-only heuristics are weak: Equal (90.9\%) and Proportional Allocation (91.4\%) retain about 90\% of the Uniform Sampling MSE, showing that stratification without variance information yields limited gains. Second, proxy-Neyman performs best at nearly all budgets and closely matches Oracle-Neyman on average (83.7\% vs.\ 82.8\%), while Power lies in between (86.3\%), indicating that the surrogate captures variance information beyond size alone.

\begin{table}[!htbp]
\caption{Ablation study of our method varying the allocation strategy on MMLU-Redux dataset with Qwen-3.5 2B (surrogate) and Qwen-3.5 9B (target) model pairs. We report the relative MSE (\%, \emph{lower is better}) versus Uniform Sampling as the annotation budget increases. \emph{Oracle-Neyman} (gray column) is infeasible in practice and serves as the theoretical optimum given the stratification; it is excluded from the per-row best-value comparison. The best value among feasible rules in each row is shown in bold.}

\label{tab:abl_alloc}
\centering
\small
\begin{tabular}{c c c c c c}
\toprule
Budget $M$ & Equal & Proportional & Power & Proxy-Neyman (Ours) $\downarrow$ & \textcolor{gray}{Oracle-Neyman} \\
\midrule
$50$    & $95.0$ & $92.6$ & $89.3$ & $\mathbf{85.7}$ & \textcolor{gray}{$84.3$} \\
$100$   & $95.1$ & $91.9$ & $85.4$ & $\mathbf{80.7}$ & \textcolor{gray}{$85.7$} \\
$200$   & $86.9$ & $86.6$ & $84.9$ & $\mathbf{82.5}$ & \textcolor{gray}{$82.0$} \\
$400$   & $90.0$ & $92.3$ & $\mathbf{86.2}$ & $86.9$ & \textcolor{gray}{$81.2$} \\
$800$   & $87.4$ & $93.4$ & $85.6$ & $\mathbf{82.5}$ & \textcolor{gray}{$80.7$} \\
\midrule
Average & $90.9$ & $91.4$ & $86.3$ & $\mathbf{83.7}$ & \textcolor{gray}{$82.8$} \\
\bottomrule
\end{tabular}
\end{table}

\section{Conclusion and Limitations} \label{sec:discussion}
We studied active testing for advanced generative language and multimodal models, where each label usually requires costly expert annotation and classification-oriented LURE estimators fail to outperform Uniform Sampling once the target model is evaluated through multi-token generation. We introduced a training-free and theoretically unbiased active-testing method that lifts the difficulty signal from tokens to meaning via semantic entropy, partitions the evaluation pool according to this signal, and allocates the label budget using a surrogate-driven proxy for Neyman allocation. Across language and vision-language reasoning benchmarks with both open-source and frontier proprietary surrogate--target pairs, our method consistently reduces estimation error relative to Uniform Sampling, closely tracks the infeasible Oracle-Neyman reference, and reaches matched precision with substantially fewer expert labels.

Our study also has some scope limitations. First, our experiments focus on closed or parseable answer spaces where generations can be mapped to canonical answers; extending active testing to open-ended generation remains challenging because evaluation noise can obscure estimator error. Second, the method assumes access to a surrogate model that is substantially cheaper than the target while still correlated with target-side difficulty, which may constrain deployment in some API-only scenarios. Future work can extend this framework by developing modality-aware surrogate signals and allocation rules for richer vision--language, interactive, and open-ended evaluation tasks.

\newpage
\bibliography{refs}

\newpage
\appendix
\section{Proof of the unbiasedness guarantee for our method} \label{app:proof}

\Unbiasedness*

\begin{proof}
Write $\ell_i = \ell(f(x_i), y_i)$ for brevity.
Fix a stratification $\{\cD_h\}_{h=0}^{H-1}$ of $\cD$ and an allocation $\{m_h\}_{h=0}^{H-1}$ with $1 \leq m_h \leq N_h$ and $\sum_{h=0}^{H-1} m_h = M$. Within each stratum $\cD_h$, let $\cS_h$ be a uniform sample without replacement of size $m_h$ drawn from $\cD_h$, and let $\bar\ell_h = \tfrac{1}{m_h}\sum_{i \in \cS_h} \ell_i$.
Since $\cS_h$ is sampled uniformly from all size-$m_h$ subsets of $\cD_h$, each such subset has probability $1/\binom{N_h}{m_h}$. For any fixed example $i \in \cD_h$, exactly $\binom{N_h-1}{m_h-1}$ of these subsets contain $i$, so
\[
\Pr\brk*{i \in \cS_h} \;=\; \frac{\binom{N_h-1}{m_h-1}}{\binom{N_h}{m_h}} \;=\; \frac{m_h}{N_h}, \qquad \text{for every } i \in \cD_h.
\]

Rewriting the sample mean as a sum over the full stratum, $m_h \bar\ell_h = \sum_{i \in \cD_h} \indic[i \in \cS_h]\,\ell_i$, and applying linearity of expectation yields
\[
\E[\bar\ell_h] \;=\; \frac{1}{m_h}\sum_{i \in \cD_h} \E\brk*{\indic[i \in \cS_h]}\ell_i \;=\; \frac{1}{m_h}\sum_{i \in \cD_h} \frac{m_h}{N_h}\,\ell_i \;=\; \frac{1}{N_h}\sum_{i \in \cD_h} \ell_i \;=\; R_h,
\]
the within-stratum risk. Because $\{\cD_h\}_{h=0}^{H-1}$ partitions $\cD$ and $\sum_{h=0}^{H-1} N_h = N$, a second application of linearity gives
\[
\E[\wh{R}] \;=\; \E\brk*{\frac{1}{N}\sum_{h=0}^{H-1} N_h \bar\ell_h}
            \;=\; \frac{1}{N}\sum_{h=0}^{H-1} N_h R_h
            \;=\; \frac{1}{N}\sum_{h=0}^{H-1} \sum_{i \in \cD_h} \ell_i
            \;=\; \frac{1}{N}\sum_{i=1}^N \ell_i
            \;=\; R_{\mathrm{D}},
\]

which establishes the unbiasedness claim. This is the usual Horvitz--Thompson inclusion-probability argument specialized to uniform sampling within each stratum.
\end{proof}

\section{Additional ablations} \label{app:more-ablations}

We present two additional ablations that are deferred from the main text to keep the experiments section focused. As in \cref{sec:exp-ablations}, all runs use MMLU-Redux with the Qwen-3.5 2B surrogate against the Qwen-3.5 9B target, and we report \emph{relative MSE} (\%) versus {Uniform Sampling} (lower is better).

\paragraph{Effect of $H$.}
To study the impact of the number of SE-based strata $H$, we vary $H$ from $2$ to $8$; results are shown in \cref{tab:abl_strata}. We observe a non-monotonic trend in relative MSE: smaller values of $H$ (e.g., $H=2$) fail to adequately separate easy and hard instances, leading to high within-stratum variance, while overly large values (e.g., $H=8$) result in insufficient samples per stratum for reliable estimation. The best performance is achieved at $H=5$, which attains the lowest average MSE (83.7\% relative to Uniform Sampling). Based on this result, we use $H=5$ as the default setting in all main experiments.

\begin{table}[!htbp]
\caption{\textbf{Ablation on the number of strata $H$.} Relative MSE (\%, lower is better) versus {Uniform Sampling} on MMLU-Redux, Qwen-3.5 2B surrogate $\to$ Qwen-3.5 9B target. Best value per row in bold.}
\label{tab:abl_strata}
\centering
\small
\begin{tabular}{c c c c c}
\toprule
Budget $M$ & $H = 2$ & $H = 3$ & $H = 5$ (Ours)$\downarrow$ & $H = 8$ \\
\midrule
$50$    & $90.3$ & $87.6$ & $\mathbf{85.7}$ & $89.3$ \\
$100$   & $91.3$ & $86.3$ & $\mathbf{80.7}$ & $83.9$ \\
$200$   & $86.5$ & $85.5$ & $\mathbf{82.5}$ & $83.4$ \\
$400$   & $91.7$ & $\mathbf{85.4}$ & $86.9$ & $86.9$ \\
$800$   & $82.6$ & $84.2$ & $\mathbf{82.5}$ & $83.7$ \\
\midrule
Average ($\downarrow$) & $88.5$ & $85.8$ & $\mathbf{83.7}$ & $85.5$ \\
\bottomrule
\end{tabular}
\end{table}

\paragraph{Impact of offset $\delta$.}
To evaluate the impact of the offset $\delta$ in the proxy-Neyman allocation $m_h \propto N_h\paren*{\sqrt{p_h(1-p_h)} + \delta}$, we sweep $\delta \in \{0.5, 0.75, 1, 2, 5\}$. As shown in \cref{tab:abl_offset}, the sweep reveals a broad optimum: $\delta \in [0.5, 1]$ achieves similar performance, with average relative MSE between $83.7\%$ and $84.0\%$, while performance degrades to $86.2\%$ at $\delta=2$ and $87.6\%$ at $\delta=5$. This trend reflects a trade-off in the surrogate-side proxy: small $\delta$ amplifies noise in $\sqrt{p_h(1-p_h)}$ by approaching pure proxy-Neyman allocation, while large $\delta$ flattens the allocation toward Proportional Allocation and weakens the variance signal. We adopt $\delta=0.75$ as the default, which achieves the lowest average relative MSE in the sweep.

\begin{table}[!htbp]
\caption{\textbf{Ablation on the offset $\delta$ in the proxy-Neyman allocation.} Relative MSE (\%, lower is better) versus {Uniform Sampling} on MMLU-Redux, Qwen-3.5 2B surrogate $\to$ Qwen-3.5 9B target. Best value per row in bold.}
\label{tab:abl_offset}
\centering
\small
\begin{tabular}{c c c c c c}
\toprule
Budget $M$ & $\delta = 0.5$ & $\delta = 0.75$ (Ours)$\downarrow$ & $\delta = 1$ & $\delta = 2$ & $\delta = 5$ \\
\midrule
$50$    & $86.0$ & $\mathbf{85.7}$ & $85.8$ & $89.0$ & $89.8$ \\
$100$   & $82.2$ & $80.7$ & $\mathbf{80.6}$ & $82.3$ & $83.5$ \\
$200$   & $83.8$ & $\mathbf{82.5}$ & $82.9$ & $85.3$ & $87.1$ \\
$400$   & $\mathbf{86.3}$ & $86.9$ & $87.1$ & $89.7$ & $90.2$ \\
$800$   & $\mathbf{81.9}$ & $82.5$ & $83.2$ & $84.6$ & $87.4$ \\
\midrule
Average & $84.0$ & $\mathbf{83.7}$ & $83.9$ & $86.2$ & $87.6$ \\
\bottomrule
\end{tabular}
\end{table}
\begin{table}[htbp]
\caption{\textbf{Hyperparameters used in our experiments.} Top block: decoding configuration for model inference. Bottom block: active-testing hyperparameters of our method.}
\label{tab:hyperparams}
\centering
\small
\begin{tabular}{l c l}
\toprule
\multicolumn{3}{c}{\textbf{Model inference}} \\
\midrule
Parameter & Value & Description \\
\midrule
$k$                          & $10$        & Number of generations per input \\
\texttt{temperature}         & $0.7$       & Sampling temperature \\
\texttt{top\_p}              & $0.8$       & Nucleus sampling threshold \\
\texttt{top\_k}              & $20$        & Top-$k$ sampling cutoff \\
\texttt{presence\_penalty}   & $1.5$       & Presence penalty \\
\texttt{repetition\_penalty} & $1.0$       & Repetition penalty \\
\texttt{max\_new\_tokens}    & model-max   & Maximum supported by each model \\
\midrule[\heavyrulewidth]
\multicolumn{3}{c}{\textbf{Active testing (our method)}} \\
\midrule
Parameter & Value & Description \\
\midrule
$H$        & $5$       & Number of semantic-entropy strata \\
$\delta$   & $0.75$    & Proxy-Neyman allocation offset (\cref{eq:allocation}) \\
$T$        & $3{,}000$ & Monte Carlo trials per labeling budget \\
\bottomrule
\end{tabular}
\end{table}
\section{Reproducibility and compute} \label{app:compute}

\paragraph{Models, data, and code.}
All datasets are downloaded from Hugging Face using the public benchmark releases listed in \cref{tab:datasets}. For open-weight experiments, we download the official Qwen family from Hugging Face. For API-based experiments, we use the official OpenAI API service. We provide full algorithmic details, hyperparameters, and generation/parsing protocols in this appendix to support reproduction without released code.

\paragraph{Generation and parsing details.}\label{app:parsing}
Each GPU is used independently for inference with the vLLM framework on NVIDIA RTX 6000 Ada hardware. Surrogate signals are generated with the vLLM decoding configuration listed in \cref{tab:hyperparams}. We parse model generations with the official standard parser script released by each benchmark. This benchmark-official parsing protocol may differ from model-official self-reporting pipelines, which may add an LLM judge as a final fallback after rule-based parsing. Parsing failures account for less than $1\%$ in every pool in our experiments and have minimal impact on the reported experimental precision.

\paragraph{Active-testing hyperparameters.}
Unless otherwise stated, our method uses the hyperparameters summarized in \cref{tab:hyperparams}. We describe the parameter-selection process in \cref{sec:exp-ablations}.

\end{document}